\DeclareMathOperator*{\argmax}{arg\,max}
\theoremstyle{thmstyleone}%
\newtheorem{theorem}{Theorem}%  meant for continuous numbers
\theoremstyle{thmstyletwo}%
\theoremstyle{thmstylethree}%
 \newcommand{\sgn}{\operatorname{sgn}}
\newcommand*{\eg}{\textit{e.g.},\@\xspace}
\newcommand*{\ie}{\textit{i.e.},\@\xspace}
\def\etal{\textit{et al.}\@\xspace}
\newcommand{\tabref}[1]{Tab.~\ref{#1}}
\newcommand{\algoref}[1]{Alg.~\ref{#1}}
\newcommand{\equref}[1]{Eq.~\ref{#1}}
\newcommand{\theoremref}[1]{Theorem.~\ref{#1}}
\newcommand{\sectionref}[1]{Sec.~\ref{#1}}
\newtheorem*{theorem*}{Theorem}
\begin{document}

\title[A3T]{A3T: Accuracy Aware Adversarial Training}

%%=============================================================%%
%% Prefix	-> \pfx{Dr}
%% GivenName	-> \fnm{Joergen W.}
%% Particle	-> \spfx{van der} -> surname prefix
%% FamilyName	-> \sur{Ploeg}
%% Suffix	-> \sfx{IV}
%% NatureName	-> \tanm{Poet Laureate} -> Title after name
%% Degrees	-> \dgr{MSc, PhD}
%% \author*[1,2]{\pfx{Dr} \fnm{Joergen W.} \spfx{van der} \sur{Ploeg} \sfx{IV} \tanm{Poet Laureate} 
%%                 \dgr{MSc, PhD}}\email{iauthor@gmail.com}
%%=============================================================%%

\author*[1]{\fnm{Enes} \sur{Altinisik}}\email{ealtinisik@hbku.edu.qa}
\author[1]{\fnm{Safa} \sur{Messaoud}}\email{smessaoud@hbku.edu.qa}
\author[1]{\fnm{Husrev Taha} \sur{Sencar}}\email{hsencar@hbku.edu.qa}
\author[1]{\fnm{Sanjay} \sur{Chawla}}\email{schawla@hbku.edu.qa}

\affil[1]{\orgdiv{Qatar Computing Research Institute}, \orgname{HBKU}, \orgaddress{ \state{Doha}, \country{Qatar}}}

\abstract{Adversarial training has been empirically shown to be more prone to overfitting than standard training. The exact underlying reasons still need to be fully understood.  In this paper, we identify one cause of overfitting related to current practices of generating adversarial samples from misclassified samples.  To address this, we propose an alternative approach that leverages the misclassified samples to mitigate the overfitting problem. We show that our approach achieves better generalization while having comparable robustness to state-of-the-art adversarial training methods on a wide range of computer vision, natural language processing, and tabular tasks.}

\keywords{Adversarial Training; Overfitting in Adversarial Training; Misclassification Aware Adversarial Training}

% %%\pacs[JEL Classification]{D8, H51}

% %%\pacs[MSC Classification]{35A01, 65L10, 65L12, 65L20, 65L70}

\maketitle
\section{Introduction}
\label{sec:introduction}

Deep learning achieved excellent generalization on a wide variety of tasks \cite{jumper2021highly, fawzi2022discovering, ramesh2021zero, he2015delving, vaswani2017attention}. However, deepnets are also known to be vulnerable to adversarial attacks \cite{papernot2017practical, kurakin2016adversarial, kurakin2018adversarial, schonherr2018adversarial}, due to the non-smooth nature of their associated loss landscapes. Adversarial training (AT) \cite{goodfellow2014explaining} is established as the most efficient mechanism for defending against these attacks and consists of training on adversarially perturbed samples instead of on the original ones. Yet, AT is also more prone to overfitting \cite{madry2017towards,schmidt2018adversarially,rice2020overfitting, dong2021exploring, yu2022understanding,stutz2022understanding} and therefore results in a lower generalization than standard training. The reasons behind this are still under-explored. Nevertheless, several classical and modern deep learning remedies for overfitting, \eg regularization \cite{zhang2022alleviating, miyato2015distributional, gan2020large, chen2020robust} and data augmentation \cite{rice2020overfitting,wu2020adversarial,gowal2020uncovering} have been proposed without a full understanding of the root causes.

Recently, Dong \etal \cite{dong2021exploring} show that overfitting is due to augmenting the training data with `hard' to classify adversarial samples that incentivize a complicated adjustment of the decision boundaries. The authors argue that these `hard' adversarial samples are generated from samples close to the decision boundary with noisy or inappropriate ground-truth one-hot encoded labels. They show that previously proposed techniques for label and weight smoothing \cite{pang2020bag, huang2020self, chen2020robust} can alleviate this issue.

\begin{figure}
    \centering
    \includegraphics[width=0.9\textwidth, trim = 0cm 4.5cm 1.5cm 3.5cm, clip]{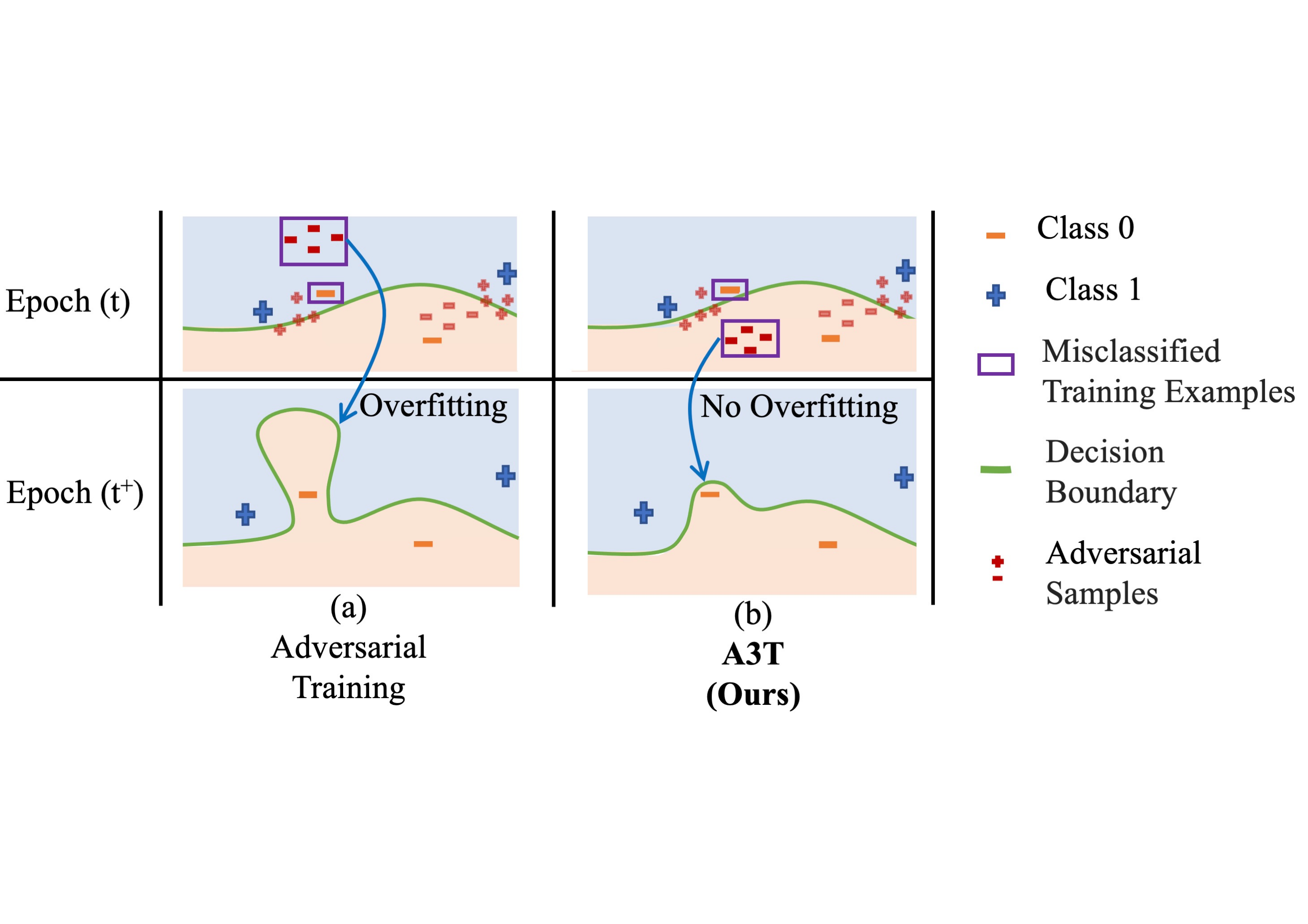}
    \caption{(a) In adversarial training, adversarial examples generated from misclassified samples are always placed maximally away from the decision boundary, making the model more prone to overfitting; (b) A3T adjusts the adversarial objective to generate adversarial examples close to the decision boundary and thus encourages learning smoother loss landscapes which achieves better generalization.}
    \label{fig:a3t}
\end{figure}

In this paper, we identify another category of `hard' adversarial samples. Specifically, we show that adversarial samples generated from \textit{misclassified samples} in every iteration of the training are, counter-intuitively, always placed maximally away from the decision boundary, as illustrated in \figref{fig:a3t}. Hence, they induce a large loss leading to substantial perturbations in the local region of the misclassified point. This problem is more exacerbated in high-capacity models, as they have the flexibility of making local changes to a decision boundary. To address this problem, we propose Accuracy Aware Adversarial Training (A3T) which controls the generation of adversarial samples in a manner that is aware of the current predicted label of the original training example. While Wang \etal \cite{wang2019improving} and Ding \etal \cite{ding2018mma} proposed formulations that generate adversarial samples differently for misclassified samples, their insights were imperially driven, and they do not identify nor directly address the root cause for poor generalization, \ie adversarial samples from misclassified samples are generated maximally away from the decision boundary. Differently, A3T proposes a simpler fix that directly addresses the root cause, \ie places adversarial samples from misclassified samples closer to the decision boundary.

We show that our A3T improves upon previously adopted techniques for mitigating overfitting in AT and achieves better generalization while having comparable robustness to state-of-the-art models on both toy experiments as well as on computer vision, natural language processing, and tabular applications. This hence demonstrates that we have identified a root cause for overfitting that is unaddressed in the literature.

\section{Related work}
\label{sec:related_work}

\subsection{Adversarial Training}
Adversarial Training (AT) \cite{goodfellow2014explaining} is a defense mechanism against adversarial attacks which aim at fooling a trained supervised machine learning model by adding imperceptible perturbations to the inputs to cause misclassification. Modern day deepnets are particularly vulnerable to such attacks. Due to the non-smooth nature of their loss-landscapes, it is possible to find, at almost any input sample, the direction of a steep gradient (adversarial direction) where perturbations along such directions lead to high loss and potentially a different prediction \cite{szegedy2013intriguing}. To address this, in AT, the model is trained to correctly classify perturbed versions of the inputs. Formally, AT is formulated as a min-max program, searching for the best parameters $\theta$ of a classifier $f_{\theta}$ under the worst-case perturbation $\delta$ applied to an input $x$ \cite{madry2017towards}, \ie
\begin{equation}
\min_{\theta} \mathbf{E}_{(x,y)\sim\mathcal{D}}\big[ \max_{\delta \in \Delta} \ell(f_{\theta}(x+\delta), y )  \big],
\label{eq:adversarial_loss}
\end{equation}
where $\ell$ is a loss function (\eg cross-entropy loss) and $\Delta$ is a constrained set of imperceptible perturbations. The outer-optimization program is classically solved using gradient descent. Popular approaches for solving the constrained inner maximization program include Fast Gradient Sign Methods (FGSM) \cite{szegedy2013intriguing,goodfellow2014explaining} and Projected Gradient Descent (PGD) \cite{madry2017towards}. In FGSM, the perturbation $\delta$ is computed via:
\begin{equation}
    \delta = \alpha \cdot sgn(\nabla_{x} \ell( f_{\theta}(x), y  ) ),
\end{equation}
with $\alpha$ being the learning rate and $sgn$ the sign operator. PGD is a multi-step variant of FGSM. It starts at randomly initialized perturbations in the feasible set $\Delta$ and iteratively applies a gradient ascent update followed by a projecting onto $\Delta$:
\begin{equation}
    \delta^{k}:=\Pi_{\Delta} \left( \delta^{k-1} + \alpha \nabla_{x}\ell(f_\theta(x+\delta^{k-1}),y) \right)
    \label{eq:PGD},
\end{equation}
where $\Pi_{\Delta}$ is the projection operator into $\Delta$ and $k$ is the number of steps the algorithm runs. 

AT is only defined for correctly classified samples. However, adversarial examples are generated following the same procedure for both correctly classified and misclassified samples. We will show that this makes the model more prone to overfitting.

\subsection{Overfitting in AT}
Different extension to standard AT objective (\equref{eq:adversarial_loss}) have been proposed to alleviate overfitting, including Local Distributional Smoothness (LDS) \cite{miyato2018virtual, zhang2019theoretically, villa, miyato2018virtual}, Margin Maximization (MM) \cite{balaji2019instance,cheng2020cat}, Label-Smoothing Regularization (LSR) \cite{cheng2020cat} and Misclassification Awareness (MA) \cite{wang2019improving}. Objectives of notable methods from these different families are presented in \tabref{tab:cmp}. 
However, all these attempts try to fix overfitting without identifying the root causes behind it. Naturally, creating a truly robust model would require a deeper understanding of the source of this vulnerability. Recently, Dong \etal \cite{dong2021exploring} have shown that overfitting is caused by generating `hard' to classify adversarial samples that are memorized by the network, \ie the network adjusts its boundaries in a complex way around these samples in order to produce correct classifications. The authors argue that these `hard' adversarial samples are generated from samples close to the boundaries and hence their associated one-hot encoding ground truth label is likely to be inaccurate or noisy. To fix this, the authors propose label smoothing of the adversarial samples. Yu \etal \cite{yu2022understanding} empirically show that small-loss data is responsible for overfitting and propose a minimum loss constrained adversarial training procedure that increases the loss of small-loss data. 

\noindent\textbf{Local Distributional Smoothness (LDS)} extends the robust optimization problem in \equref{eq:adversarial_loss} with a regularization term that encourages the label distribution around each sample point to be locally smooth. This is achieved via minimizing the KL divergence between the label distribution of the original sample and the one of the corresponding adversarially generated example ~\cite{miyato2018virtual, zhang2019theoretically}. Intuitively, this results in forcing the decision boundary to be sufficiently away from the training samples ~\cite{miyato2018virtual, zhang2019theoretically}. AT objectives from this family include TRADES \cite{zhang2019theoretically} and VILLA \cite{villa} (Lines 2 and 3 in Table \ref{tab:all}). A major drawback of this approach is that it equally reinforces correct and incorrect predictions, \ie it encourages the adversarial examples from misclassified samples to have the same label as the original (misclassified) samples. This results in a drop in generalization. \\
\noindent\textbf{Margin Maximization (MM)} methods propose to individually tune the perturbation strength for each sample. Intuitively, adopting a fixed level of perturbation for all the samples can potentially result in pushing adversarial examples further away from the boundary, causing them to be mixed with samples of other classes. This would force the model to learn a non-smooth and complex decision boundary. AT bjectives from this class include MMA \cite{ding2018mma}, IAAT \cite{balaji2019instance}, and CAT \cite{cheng2020cat} (Lines 4-6 in \tabref{tab:all}). Note that MM methods only generate adversarial examples from correctly classified samples.\\
\noindent\textbf{Label-Smoothing Regularization (LSR)}: One cause for overfitting is the use of one-hot encoded ground-truth labels for adversarial samples \cite{dong2021exploring}, which essentially forces the model to assign an overconfident probability of one to all samples of a given class. This can be noisy or inappropriate in case of adversarial examples as they are deliberately generated by pushing close-to-the-boundary samples over to boundary to the side of another class. To address this, Cheng \etal \cite{cheng2020cat} propose applying a label-smoothing regularization depending on the perturbation tolerance of each sample.\\
\noindent\textbf{Misclassification Awareness (MA)}: Wang \etal \cite{wang2019improving} imperially show that adversarial examples from misclassified samples result in a drop in robustness and propose extending the standard training loss with a misclassification aware regularization (line 7 in \tabref{tab:all}). However, this has the same drawback as LDS approaches. In this work, we provide a deeper understanding of the drawback of the current practice in generating adversarial examples from misclassified samples and propose an easier fix that dresses the root cause. 

\begin{sidewaystable}
\sidewaystablefn%
\footnotesize
\begin{center}
\begin{minipage}{\textheight}
\caption{Optimization objectives for AT methods and their main characteristics.}\label{tab2}

\begin{tabular*}{\textheight}{@{\hspace{0\tabcolsep}}r@{\hspace{0.5\tabcolsep}}l@{\hspace{0\tabcolsep}}c@{\hspace{0.5\tabcolsep}}c@{\hspace{1\tabcolsep}}c@{\hspace{1\tabcolsep}}c@{\hspace{1\tabcolsep}}c}
\toprule%
&Method & Training Objective & MM & LSR & MA & LDS  \\
\midrule
1&PGD-AT \cite{madry2017towards} & $\min_{\theta} \mathbb{E} \left[ \ell\left(\max_{\delta\in \Delta} f_{\theta} (x+\delta), y \right) \right] $ & - & - & - & -\\
2&TRADES \cite{miyato2018virtual} & $ \min_{\theta} \mathbb{E}\left[ \ell \left( f_{\theta}(x),y\right) +\lambda\times\ell_{CE}\left(\max_{\delta\in \Delta} f_{\theta} (x+\delta), f_{\theta} (x)\right) \right]$ & - & - & - & \checkmark \\
3&VILLA \cite{villa} & $\min_{\theta} \mathbb{E}\left[ \ell\left(f_{\theta}(x),y\right) + \lambda_1\times \ell(\max_{\delta\in \Delta}  f_{\theta} (x+\delta),y) +\lambda_2\times \ell_{KL}\left(\max_{\delta\in \Delta}  f_{\theta} (x+\delta), f_{\theta} (x)\right) \right] $ & - & - & - & \checkmark \\
4&MMA \cite{ding2018mma}&$\min_{\theta} \mathbb{E}\left[ \ell \left( f_{\theta}(x),y\right)\mathds{1}(f_\theta(x)\neq y)+ \ell \left(\max_{\delta\in \Delta}  f_{\theta}(x+\delta),y\right)\mathds{1}(f_\theta(x)=y) \right]$ & \checkmark & - & - & - \\
5&IAAT \cite{balaji2019instance}& $\min_{\theta} \mathbb{E}\left[ \ell \left(\max_{\delta\in \Delta}  f_{\theta}(x+\delta),y_i\right) \right] $ & \checkmark& - & - & -\\
6&CAT \cite{cheng2020cat} & $\min_{\theta} \mathbb{E}\left[ \ell \left(\max_{\delta\in \epsilon_i}  f_{\theta}(x+\delta),(1-c\epsilon_i)y_i+c\epsilon_i\text{Dirichlet}(\beta)\right) + \left( \max_{j\neq y_0}[Z(x_i')]_j-[Z(x_i')]_{y_0} \right) \right] $ & \checkmark& \checkmark & - & -\\
7&MART \cite{wang2019improving} & $\min_{\theta} \mathbb{E}\left[ \ell \left(\max_{\delta\in \Delta} f_{\theta}(x+\delta),y\right) + \ell_{KL}\left( f_{\theta} (x+\delta), f_{\theta} (x)\right) (1-[f_{\theta} (x+\delta)]_y)\right] $ & - & - & \checkmark &  \checkmark \\
8&A3T & $\min_{\theta} \mathbb{E} \left[ \ell\left(f_{\theta}\left(x+\argmax_{\delta \in \Delta} \ell\left(f_{\theta}\left(x+\delta\right), \hat{y} \right) \right),y \right) \right] $ & - & - & \checkmark &  - \\
8& $A3T^+$ & $\min_{\theta} \mathbb{E}\left[ \ell \left(f_{\theta}\left(x+\argmax_{\delta \in \Delta} \ell\left(f_{\theta}\left(x+\delta\right), \hat{y} \right) \right),(1-c\epsilon_i)y_i+c\epsilon_i\text{Dirichlet}(\beta)\right) + \left( \max_{j\neq y_0}[Z(x+\delta)]_j-[Z(x+\delta)]_{y_0} \right) \right] $ & \checkmark & \checkmark& \checkmark & -\\
\botrule
\label{tab:all}
\end{tabular*}
\end{minipage}
\end{center}
\end{sidewaystable}

\section{Accuracy Aware Adversarial Training (A3T)}
\label{sec:approach}
Given a data set $\mathcal{D} = \{(x_i,y_i)\}$, we consider a $K$- class classification setup. The classifier is
a $\theta$-parameterized score function $f_{\theta}(x) = (f^{1}_{\theta}(x),\ldots,f^{K}_{\theta}(x))$, where the
$i$-th class is assigned the score $f^{i}_{\theta}(x)$. The predicted label of $x$ is denoted as $\hat{y} = \argmax_{i}f^{i}_{\theta}(x)$. The classic adversarial training problem is formulated as a min-max optimization problem~\cite{madry2017towards}:
\begin{equation}
\min_{\bm{\theta}}\sum\limits_{(x,y) \in D}\big[ \max_{\delta \in \Delta} \ell(f_{\bm{\theta}}(x+ \delta), y )  \big],
\label{eq:adversarial_loss1}
\end{equation}
Here $\ell$ is the loss function used for training the classifier, and $\Delta$ is typically a $L_{\infty}$ norm ball bounded by $\epsilon$. 

For linear binary classification problems, the inner maximization can be solved analytically. Specifically, suppose $f_{\bm{\theta}}(\mathbf{x})=\bm{\theta}\mathbf{x} + b$
with class labels $y \in \{+1, -1\}$ and logistic loss 
\[
\ell(f_{\bm{\theta}}(x), y ) = \log(1 + \exp(-y \cdot f_{\bm{\theta}}(\mathbf{x})))
\equiv L(y.f_{\bm{\theta}}(x)).
\]
Then, 
\[
\delta^{\ast} =  \argmax_{\delta \in \Delta} \ell(f_{\bm{\theta}}(x+\delta), y ) 
\]
For nonlinear models (\eg deepnets), the inner maximization has to be
solved in an iterative fashion using PGD or its  variants~\cite{madry2017towards}.
%Most works (barring \cite{ding2018mma,wang2019bilateral}) does not distinguish between correctly classified and misclassified training examples when solving the inner maximization. 
In case of correctly classified training samples, the optimal perturbation would result in misclassified adversarial examples that are closer to the decision boundary than the original ones. However, in case of misclassified samples, the same optimal perturbation will result in adversarial examples that are further away from the decision boundary than the original samples. For high-capacity models, AT with such samples will result in an extremely non-smooth boundary with poor generalization.

\noindent\textbf{A3T Loss}: To prevent overfitting, A3T aims at generating close to decision boundary adversarial examples. To achieve this, it generates adversarial examples differently for classified and misclassified samples, \ie it solves different inner maximization objectives depending on the samples prediction accuracy:

\begin{equation}
    \begin{split}
        \min_{\theta} \left[\sum_{(x,y)\in\mathcal{D_{\theta}^+}} \ell\left(f_{\theta}\left(x+\argmax_{\delta \in \Delta} \ell\left(f_{\theta}\left(x+\delta\right), y \right) \right),y \right) \right. \\
        + \left.\sum_{(x,y)\in\mathcal{D_{\theta}^-}}\ell\left(f_{\theta}\left(x+\argmax_{\delta \in \Delta} \ell\left(f_{\theta}\left(x+\delta\right), \hat{y} \right) \right),y \right)\right].
    \end{split}
\label{eq:our_loss}
\end{equation}

Here, $\mathcal{D_{\theta}^+}$ and $\mathcal{D_{\theta}^-}$ are the set of correctly and misclassified examples, respectively. Note that the loss terms of the two inner maximizations have different arguments, \ie $y$ vs $\hat{y}$.
In particular, for the misclassified examples, the loss uses the
predicted label $\hat{y}$ as its second argument. 
The optimal perturbation $\delta^{*}$ can be computed using PGD as follows:
\begin{equation}
\delta^* :=  
\begin{cases}
\Pi_{\Delta} \left( \delta + \alpha \nabla_{x}\ell(f_\theta(x+\delta),y) \right) & \text{if } (x,y)\in \mathcal{D_{\theta}^+}\\
\Pi_{\Delta} \left( \delta + \alpha \nabla_{x}\ell(f_\theta(x+\delta),\hat{y}) \right)  & \text{otherwise} 
\end{cases}.
\label{eq:delta_a3t}
\end{equation}

Since $\hat{y}=y$ holds for samples in $\mathcal{D_{\theta}^+}$, \equref{eq:delta_a3t} can be simplified to
\begin{equation}
\delta^* := \Pi_{\Delta} \left( \delta + \alpha \nabla_{x}\ell\left(f_\theta(x+\delta),\hat{y}\right) \right).
\label{eq:delta}
\end{equation}
Combining \equref{eq:our_loss} and \equref{eq:delta} results in an alternative training objective to \equref{eq:our_loss}, \ie
\begin{equation}
        \min_{\theta} \left[\sum_{(x,y)\in\mathcal{D}} \ell\left(f_{\theta}(x+ \delta^*),y \right) \right].
\label{eq:final_objective}
\end{equation}
\begin{theorem} Let $f_{\bm{\theta}}(\mathbf{x})=\bm{\theta}\mathbf{x} + b$ be a linear model trained with a logistic loss $\ell$. Assume that $(\mathbf{x}_i,y_i)$ is a misclassified training example and that $\bm{\delta}_1 = \argmax_{\delta \in \Delta}\ell(f_{\bm{\theta}}(\mathbf{x_i}), y_i)$ and $\delta_2 = \argmax_{\delta \in \Delta}\ell(f_{\bm{\theta}}(\mathbf{x_i}), (1 - 2y_i))$ are the solutions to the inner maximization using standard AT (\equref{eq:adversarial_loss1}) and A3T (\equref{eq:our_loss}), respectively. We prove that
\[
\abs{\bm{\theta}(\mathbf{x}_i + \bm{\delta}_{2}) + b} \leq 
\abs{\bm{\theta}(\mathbf{x}_i + \bm{\delta}_{1}) + b}.
\]
\label{theorem:theorem1}
\end{theorem}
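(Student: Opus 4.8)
The plan is to exploit the fact that for a linear score function with logistic loss, both inner maximizations admit closed-form solutions over the $L_\infty$ ball $\Delta = \{\bm{\delta} : \|\bm{\delta}\|_\infty \le \epsilon\}$, which reduces the claim to a one-line comparison of two absolute values. First I would introduce the shorthand $z := \bm{\theta}\mathbf{x}_i + b$ for the clean logit and $M := \epsilon \|\bm{\theta}\|_1 \ge 0$. Since $(\mathbf{x}_i, y_i)$ is misclassified, its predicted label is the opposite of $y_i$ (this is exactly the label $1 - 2y_i$ appearing in $\bm{\delta}_2$), so $y_i z < 0$; I would record this sign condition as the single structural hypothesis on which the rest of the argument rests.

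Next I would derive the two perturbations. Writing $\ell(f_{\bm{\theta}}(\mathbf{x}_i + \bm{\delta}), y) = \log(1 + \exp(-y(z + \bm{\theta}\bm{\delta})))$ and using the monotonicity of $t \mapsto \log(1 + e^{t})$, maximizing the loss over $\bm{\delta} \in \Delta$ is equivalent to minimizing $y\,\bm{\theta}\bm{\delta}$; over the $L_\infty$ ball this minimum is $-\epsilon\|\bm{\theta}\|_1 = -M$, attained at $\bm{\delta} = -\epsilon\, y\, \sgn(\bm{\theta})$, so the optimal perturbed logit equals $z - yM$. Substituting the true label $y = y_i$ yields $\bm{\theta}(\mathbf{x}_i + \bm{\delta}_1) + b = z - y_i M$, while the flipped label $y = -y_i$ used by A3T yields $\bm{\theta}(\mathbf{x}_i + \bm{\delta}_2) + b = z + y_i M$. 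The intuition behind A3T then becomes explicit: standard AT drives the logit further along the already-wrong direction, whereas A3T drives it back toward the boundary.

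Finally I would compare magnitudes. Using $|y_i| = 1$ to factor $y_i$ inside each absolute value, set $w := y_i z$, so that $\abs{z - y_i M} = \abs{w - M}$ and $\abs{z + y_i M} = \abs{w + M}$. Because $w < 0 \le M$, the first equals $M - w = M + \abs{w}$, while the triangle inequality gives $\abs{w + M} \le \abs{w} + M$; hence $\abs{\bm{\theta}(\mathbf{x}_i + \bm{\delta}_2) + b} \le \abs{\bm{\theta}(\mathbf{x}_i + \bm{\delta}_1) + b}$, which is precisely the claim.

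I expect the genuine mathematical content to be light: once the closed forms are in hand, the inequality is immediate. The main obstacle is therefore the bookkeeping that precedes it, namely justifying that the $L_\infty$-constrained maximizer of the monotone logistic loss coincides with the maximizer of the underlying linear logit, pinning down the correct sign of $\bm{\theta}\bm{\delta}$ for each target label, and confirming that $1 - 2y_i$ really encodes the opposite (predicted) label for a misclassified point so that the sign condition $y_i z < 0$ genuinely holds. Care on these points is what collapses the final comparison into a single application of the triangle inequality.
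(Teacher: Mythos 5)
Your proof is correct and follows essentially the same route as the paper's: derive the closed-form $L_\infty$ maximizers $\delta = -\epsilon y\,\sgn(\bm{\theta})$ for each target label, reduce the claim to comparing $\abs{w+M}$ with $\abs{w-M}$ where $w=y_i(\bm{\theta}\mathbf{x}_i+b)<0$ by misclassification and $M=\epsilon\|\bm{\theta}\|_1\ge 0$, and conclude. If anything, your explicit reduction to the triangle inequality is tidier than the paper's chain of manipulations inside absolute values, which leaves the direction of the monotonicity step and its application under the modulus somewhat loose.
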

\begin{proof}
See Appendix.
\end{proof}
\noindent
Intuitively, \theoremref{theorem:theorem1} sates that adversarial examples from misclassified samples generated using A3T are closer to the decision boundary than the ones generated using AT. 

\noindent
{\bf Example 1:} Consider a 2-d binary linear classification setup, where the decision boundary is given by $ x_1 +  x_2 = 1$ and labels are $y=\pm 1$. The classifier weights are therefore $ (\theta_1,\theta_2) = (1,1)$. The adversarial perturbation $\bm{\delta} = -y\epsilon\sgn(\bm{\theta})$. Suppose a data point 
$(x_1,x_2)$ with label +1 is misclassified as -1. Then the traditional perturbation based on the original label is $(x_1 - \epsilon, x_2 - \epsilon )$ but based on the proposed approach will be $(x_1 + \epsilon,x_2 + \epsilon)$. Thus the proposed approach will
push the adversarial sample corresponding to $(x_1,x_2)$ towards the boundary.

\noindent
{\bf Example 2:} Consider the linear classification setup presented in \figref{fig:linearModel}. The standard training decision boundary corresponds to the black line. The AT decision boundary is shown in orange. A3T results in the green decision boundary. Note that A3T boundary is closer to the standard boundary than the AT one. This is due the adversarial examples from misclassified samples $A$ and $B$ being placed closer to the boundary in case of A3T. This also implies that A3T performance on clean test samples is closer to the original classifier.

\begin{figure}
    \centering
    \includegraphics[width=0.7\textwidth, trim = 1.5cm 2cm 1.5cm 2cm, clip]{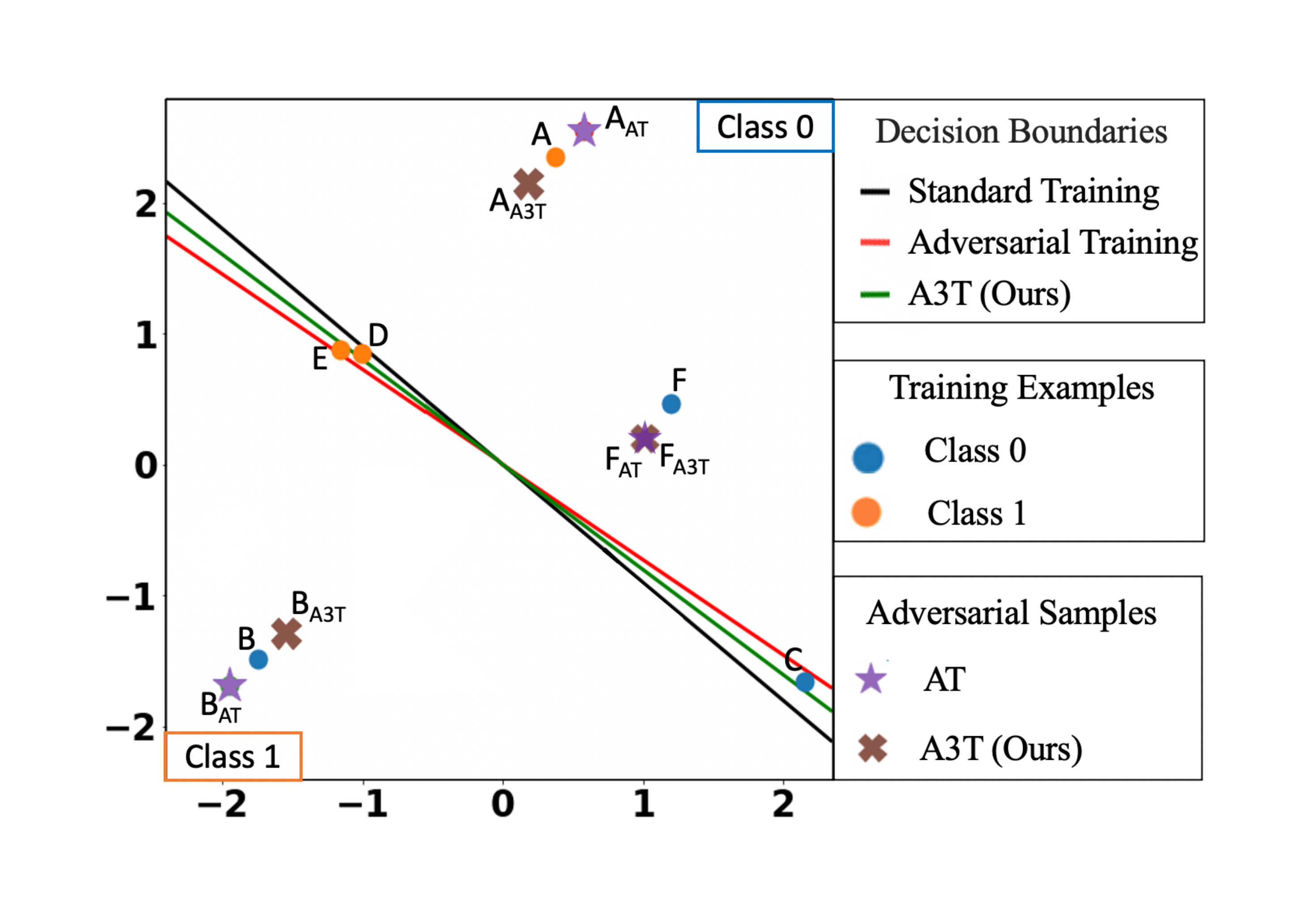}
    \caption{Adversarial training of of a linear classifier using standard AT and A3T on a two-dimensional synthetic dataset. The black, red, and green lines represent the decision boundaries of the model trained with standard training, AT and A3T, respectively. Samples $A$ and $B$ are misclassified by all the models. While A3T places adversarial examples from $A$ and $B$, \ie $A_{A3T}$ and $B_{A3T}$), closer to the boundary than $A_{AT}$ and $B_{AT}$ which are generated using AT. We omit correctly classified training examples for better visualization. }
    \label{fig:linearModel}
\end{figure}
\noindent\textbf{A3T algorithm}: For completeness, the full algorithm is shown in \algoref{a1:a3t}.

\begin{algorithm}[!tb]
\caption{Algorithm of A3T}\label{a1:a3t}
\begin{algorithmic}[1]
\Require $E$: the number of epochs, $D=\{(x_i, y_i)\}_{i=1}^n$:  the dataset, $f_{\bm{\theta}}(x)$: the machine learning model parametrized by $\bm{\theta}$, $\delta$: the perturbation initialized by $\sigma$ and limited by $\epsilon$, $\tau$: the global learning rate, $\alpha$: the adversarial learning rate, $S$: the number of PGD step, $\Pi$ the projection function.
\Ensure Model parameters $\theta$

\Procedure{A3T}{}
    \For{$e=1,..,E$}
        \For{$(x_i,y_i) \in \mathcal{D}$}
            \State $\hat{y_i} \leftarrow \argmax_{j \in \{1\ldots K\}} f^{j}_{\bm{\theta}}(x_i)$
            \State $\delta_i \sim \mathcal{N}(0,\sigma^{2}I)$
            \For{$s=1,..,S$}
                \State $\delta_i \leftarrow \Pi_{\epsilon_i}\left(\delta_i + \alpha \cdot \sgn\left(\nabla_{x_i}\ell(f_\theta(x_i+\delta_i),\hat{y_i})\right)\right)$

            \EndFor
            \State $\theta \leftarrow \theta - \tau \nabla_\theta \ell(f_\theta(x_i+\delta_i),y_i)$ 
        \EndFor
    \EndFor
\EndProcedure

\end{algorithmic}
\end{algorithm}

Note that A3T, Label Smoothing Regularization (LSR) and Margin-Maximization (MM) are complementary. LSR addresses a different cause of overfitting as explained in \sectionref{sec:related_work}. MM is orthogonal as it searches for the optimal perturbation set $\Delta$ instead of the optimal perturbation $delta$ in a predefined set $\Delta$. We denote by A3T$^+$ the approach that combines A3T, LSR and MM. A3T$^+$ objective and algorithm are given in Line 8 of Table \ref{tab:all} and \algoref{a1:a3t+}, respectively.

\subsection{Improvement over Prior Work}
So far, two misclassification aware adversarial training approaches have been proposed~\cite{wang2019improving,ding2018mma}. Wang et al. \cite{wang2019improving} were the first to explicitly recognize that misclassified and correctly classified samples should be treated differently and proposed a misclassificiation aware adversarial training method (MART). 
Their training objective consists of two separate loss terms as given in line 7 of Table \ref{tab:all}.
The first term corresponds to the adversarial loss used in conventional AT, and does not distinguish between correct and misclassified examples. 
The second term is a KL divergence loss weighted by $(1-f_{\bm{\theta}}(\mathbf{x}))$ and introduced by \cite{miyato2018virtual} effectively lowering the impact
of correctly classified samples and encouraging label smoothness around
misclassified examples.
In the context of Figure~\ref{fig:a3t}, MART populates the space around the misclassified examples by new samples of both the true class and the incorrectly predicted class. 
The true class samples will be located in a direction that is maximally away from the decision boundary and will serve as hard examples for the model. 
In contrast, the regularization based label smoothing will add samples of the incorrect class isotropically. 
In effect, the latter group of samples neutralizes the impact of the hard examples created by the conventional AT method and prevents the model from exhibiting even worse overfitting behavior.

Ding et. al.~\cite{ding2018mma} use margin-maximization to propose an implicit treatment of misclassified examples.
The perturbation margin of each example is individually determined through increasing perturbation strength gradually until an adversarial sample is generated. 
Since misclassified examples are adversarial in nature, they yield the lowest possible margin. Thus effectively adversarial samples created for misclassified examples are not significantly perturbed.

Our approach, A3T, fundamentally differs from the above methods in that the adversarial samples created include easy samples that are placed towards the decision boundary, as opposed to creating no samples or samples that are further away from the boundary. These easy samples essentially serve to correct for any potential deformation in that part of the decision boundary. 
In other words, we attribute the cause of a misclassified example to the presence of opposing class examples in the immediate vicinity of that example, preventing the model from successfully fitting them. 
By creating samples that yield lower loss, our method reduces the effect of misclassified examples on the resulting decision boundary.

 \section{Experiments}
\label{sec:experiments}

\subsection{Results on Synthetic Data}
\label{sec:synth}
To better observe the behavior of A3T, we evaluate it on a synthetic dataset introduced in \cite{villa}.
This dataset contains 1,016 samples from two classes in a two-dimensional space over Real numbers.
Samples in each class are created on two trajectories each following the shape of a crescent moon.
Each data point is then projected onto a 100-dimensional vector space.
For classification, a model with 100 neurons is trained using only 16 randomly selected training samples per class. With such a small training sample size the model is prone to overfitting, thereby allowing better evaluation of the generalization capability.
The model is trained for 400 epochs at a learning rate of 0.01.
For the first 100 epochs, the standard training loss is used and in the remaining epochs adversarial training loss is incorporated. 
Adversarial samples are created through PGD attack, Eq. \ref{eq:PGD}, after applying a random perturbation (normally distributed with zero mean and standard deviation of 0.05) with only five PGD steps assuming $\alpha=0.1$ and $\Delta=0.4$.

To verify our intuition on the operation of A3T (Fig \ref{fig:a3t}) we first show the resulting decision boundary considering this binary classification task. 
As part of the tests, 20 models are generated for each training approach using the same set of training samples and the fixed parameter values. 
Following the standard training (for 100 epochs) all models were found to correctly classify all but one (out of 32) training samples, overall exhibiting a good fitting capability.
The decision boundaries are then drawn based on the positions of all data points classified with the least confidence, i.e., by averaging the locations of all training samples in all runs that were classified with confidence in the $0.49-0.51$ probability range.
Figure \ref{fig:points} shows the decision boundaries obtained after the first 100 epochs and when the training is completed, respectively, as gray and black lines for both AT and A3T.
All adversarial samples created beyond epoch 100 are also displayed as orange and turquoise dots on the figure. 

\begin{figure}
\centering
  \begin{minipage}[a]{0.48\columnwidth}
    \centering
    \includegraphics[width=1\columnwidth]{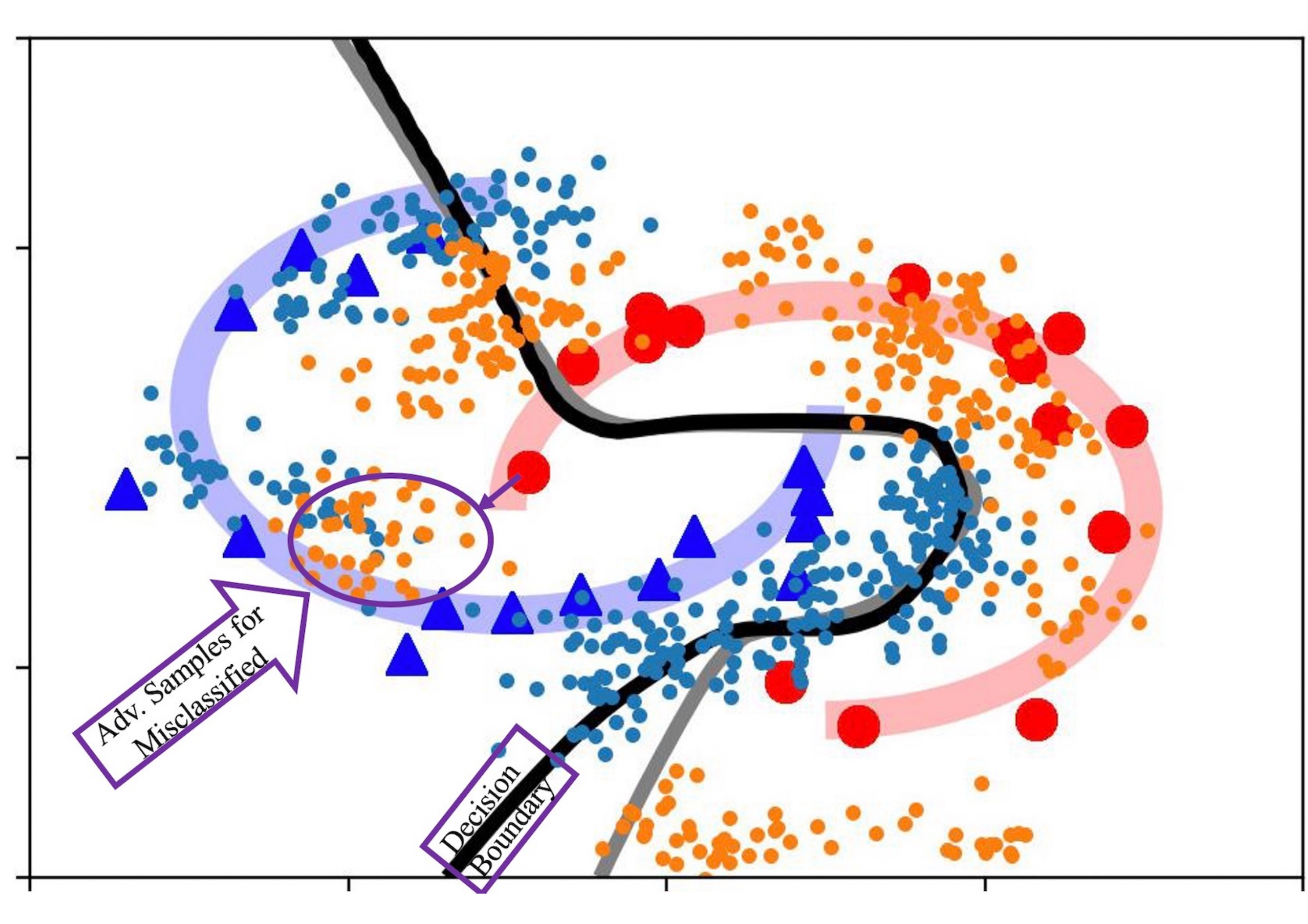}
    \centerline{(a) AT}
  \end{minipage}
  \begin{minipage}[d]{0.48\columnwidth}
    \centering
    \includegraphics[width=1\columnwidth]{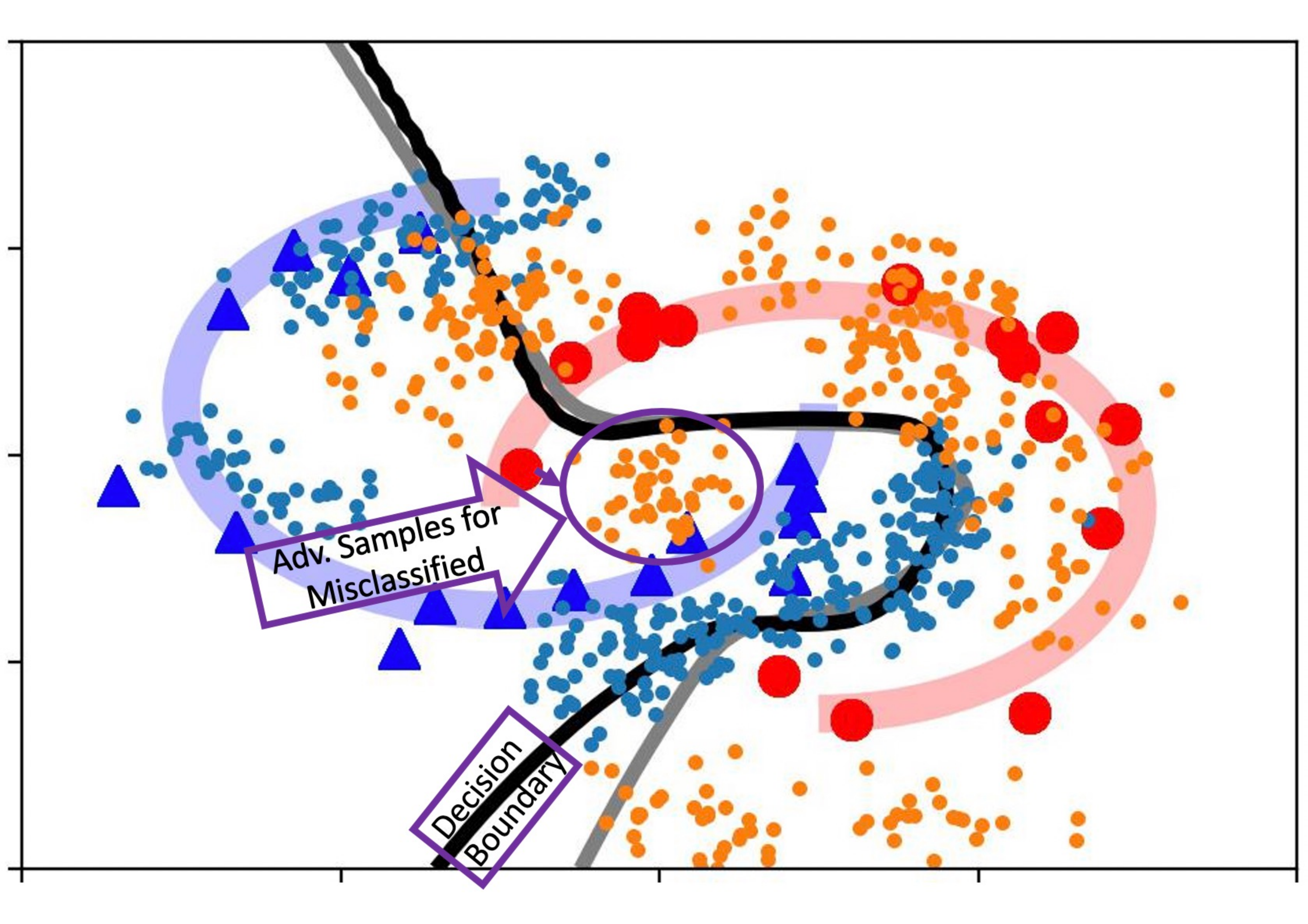}
    \centerline{(b) A3T}
  \end{minipage}
   \caption{The orange and turquoise dots show adversarial samples created by AT and A3T methods. For each method, decision boundaries are determined based on the position of test points that are assigned to the correct class with a probability of around 0.5. The gray line is obtained by the standard training (after epoch 100 is fixed) and the black line corresponds to adversarial training (obtained after epoch 400).
  A3T pushes adversarial samples misclassified example (orange colored sample) closer to the decision boundary.
   }
  \label{fig:points}
\end{figure} 

The difference between the conventional AT approach and A3T can best be seen around the misclassified examples of the red class, at the upper end of the crescent.
In the case of A3T, the adversarial samples, i.e., orange points around the misclassified examples, are created between the misclassified example and the decision boundary. 
Whereas for AT, adversarial samples are created further away from the boundary.
This further resulted in intermixing of red-class adversarial samples with the blue-class data points, which may force the model to learn an extremely non-smooth decision boundary.
In other respects, it can be seen that the final decision boundary (black line) for AT and A3T are very similar and therefore the robustness behavior may be expected to be on par with each other.

To further examine the generalization behavior, we repeated the same experiment by randomly selecting the training points for each run and averaging results over 50 runs. 
Figure \ref{fig:syntCounter} shows all training points used as well as the corresponding decision boundaries. %obtained for AT and A3T. 
(Note that training examples remain the same for both AT and A3T within a run, but they change across runs.)
Here, the generalization capability of A3T is more visible as it is able to correctly classify most of the blue-class points.
In the case of red-class data points, both methods perform similarly. 
These observations are also reflected in the measured classification accuracies where A3T achieved 83.6\%, i.e., 1.2\% better than AT.

\begin{figure}
\centering
  \begin{minipage}[a]{0.48\columnwidth}
    \centering
    \includegraphics[width=1\columnwidth]{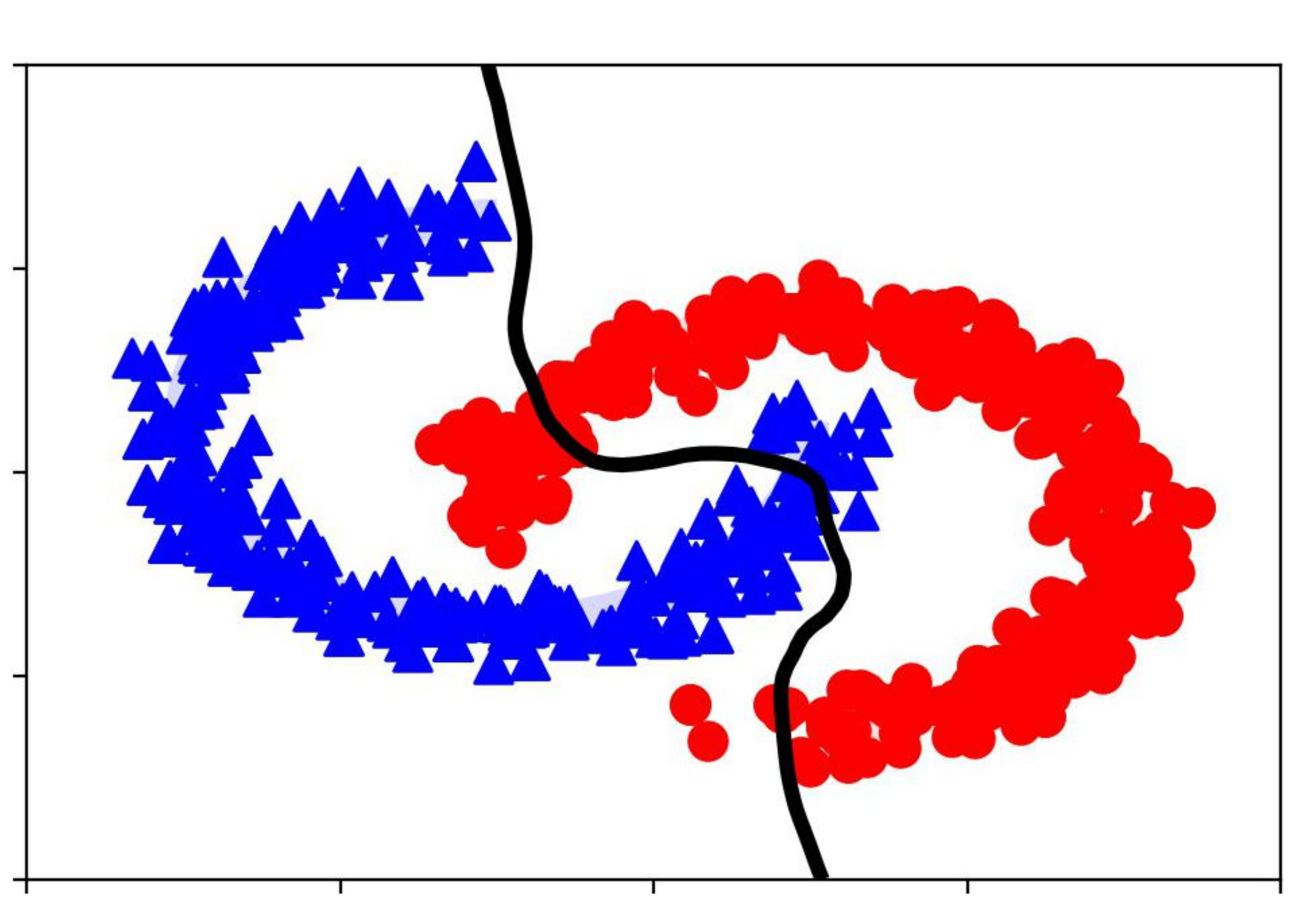}
    \centerline{(a) AT (Acc: 82.4\%) }
  \end{minipage}
  \begin{minipage}[d]{0.48\columnwidth}
    \centering
    \includegraphics[width=1\columnwidth]{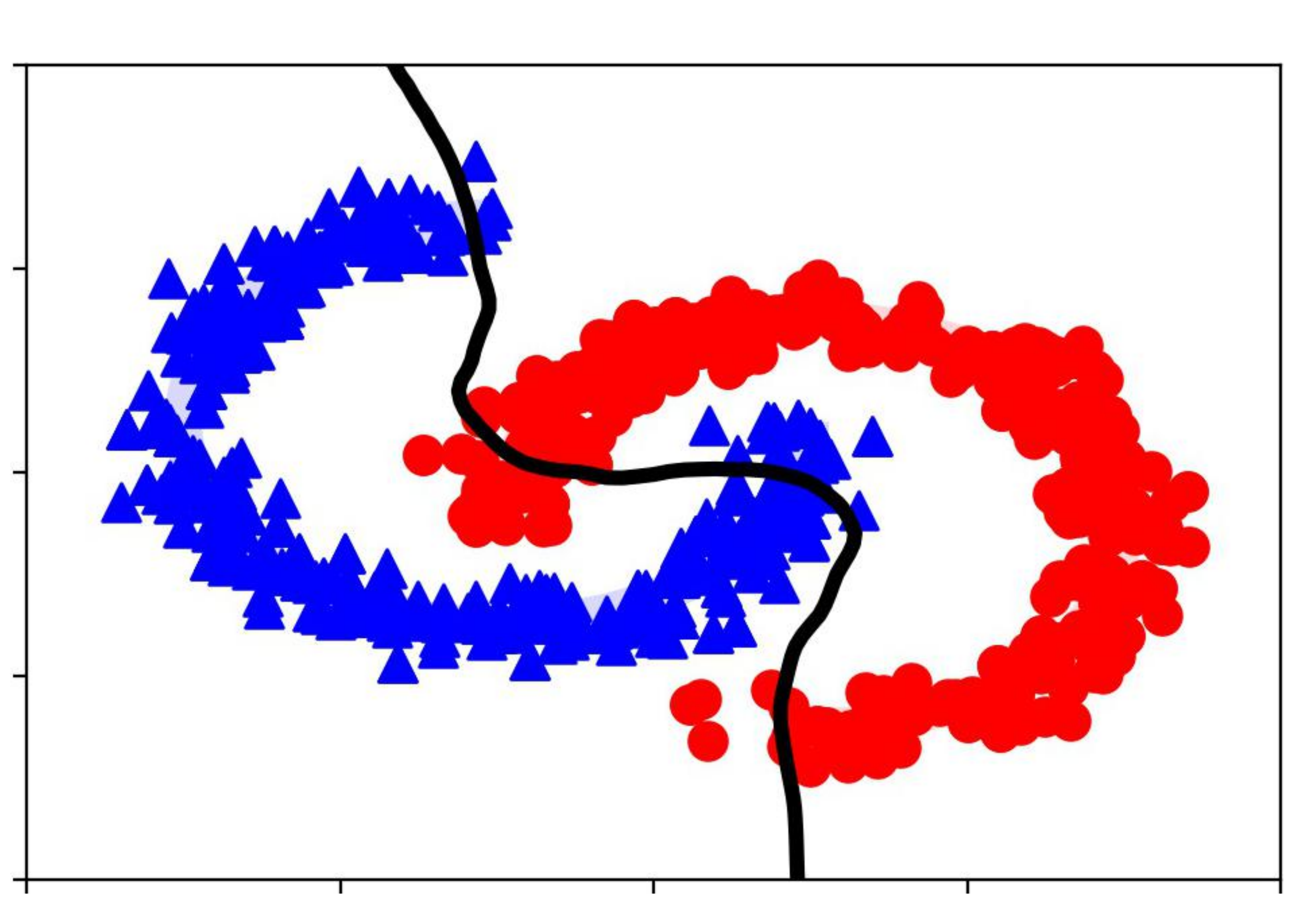}
    \centerline{(d) A3T (Acc: 83.6\%) }
  \end{minipage}
  \caption{Resulting decision boundaries obtained after 50 runs when training samples are picked at random for each run. 
  The improved generalization capability of A3T is reflected in the measured accuracy and can also be seen in the way it can more correctly classify test samples compared to AT.}
  \label{fig:syntCounter}
\end{figure}

\subsection{Results on Real Data}
We now assess the robustness and generalization trade-off yielded by A3T on tasks related to vision, natural language processing, and tabular data. 
Models obtained by standard training and adversarial training are evaluated under attack-free and attack circumstances.
The models are attacked using adversarial samples produced by PGD \cite{madry2017towards} and AutoAttack \cite{croce2020robustbench}. 

\subsubsection{Vision Tasks}

Most adversarial training methods are proposed to improve the robustness of image classifiers. 
The effectiveness of these methods has been evaluated on CIFAR-10 datasets using fixed architectures such as WideResNet-34-10 and 
WideResNet-28-10 with varying attack and training parameters. Therefore, in our tests, we also consider a similar setting.

MART is the only other method that explicitly formulates a treatment for misclassified examples during adversarial training. 
Therefore, we start by comparing A3T against MART on how they cope with overfitting using the standard cross-entropy loss.
In their paper, MART uses boosted cross-entropy loss, and report a significant improvement in accuracy  ($\sim$3\%) compared to the use of standard cross entropy loss\footnote{This is demonstrated in Fig. (2b) of \cite{wang2019improving}}. 
Since boosted loss can be generically incorporated to all adversarial training approaches, we only considered the use of cross-entropy loss to ensure a comparison on an equal footing.
To be in line with the previous methods, we trained all models for a total of 100 epochs using SGD with momentum $0.9$, weight decay $2\times10^{-4}$, an initial learning rate of $0.1$, and we decay the learning rate by 90\% at the 75th, 90th epoch.
Table \ref{tab:martvsa3t} provides corresponding accuracy results for standard and under-attack settings. 
As can be seen, the most notable difference concerns the natural accuracy values, where A3T performed 4.7\% better than MART. 
This can be attributed to A3T's ability to better mitigate overfitting, which mainly reveals itself in the model's generalization ability.
However, this gain comes at the cost of reduced robustness, where MART is seen to perform 2-3\% better. 
When compared to conventional AT, A3T improves both natural and robust accuracy by around 2\%.

\begin{table}[!ht]
	\caption{Comparison of the two misclassification-aware adversarial training methods based on results obtained on the CIFAR-10 dataset using the WideResNet-34-10 model\tablefootnote{The presented results for MART are obtained using the public implementation shared by its authors.}.A3T results in better generalization than MART at near-similar levels of robutness.} 

	\centering
    	\begin{tabular}{cccccc}
    	\toprule
    	\multirow[b]{2}{*}{Defense}&\multirow[b]{2}{*}{Natural}  &  \multicolumn{2}{c}{FGSM}    &  \multicolumn{2}{c}{$PGD^{20}$} \\ \cmidrule(lr){3-4} \cmidrule(lr){5-6}
    	                        &                          & Best     & Last               & Best     & Last \\ \hline
    	AT \cite{madry2017towards}    &  87.30 &  56.10   & 56.10 &   52.68   & 49.31\\ 
    	$MART_{CE}$ &  84.24 &  66.86   & 65.28 &   \textbf{56.27}   & 52.22 \\
    	$A3T$  &  \textbf{88.98} &  \textbf{68.04}   & \textbf{67.12} &   54.59   & \textbf{52.39}\\
    	\bottomrule
    	\end{tabular}
	\label{tab:martvsa3t}
% 	}
\end{table}

Next, we compare A3T with several adversarial training methods that leverage the ideas of MM \citep{ding2018mma,cheng2020cat}, LSR \citep{balaji2019instance,wang2019bilateral,cheng2020cat}, and LDS \cite{zhang2019theoretically} as well as the A3T$^+$ method that incorporates MM and LSR approaches \cite{cheng2020cat}.
Here we determined that the results reported by these methods include some inconsistencies, making a fair comparison challenging. 
These include the learning-rate schedule used by a method that affects the proneness to robust overfitting.
(That is, schedules that include a larger number of training epochs are expected to report lower final accuracy values.)
Another factor relates to the PGD step size used during attacks which determines the strength of the attack applied to a model.
To circumvent these ambiguities, we decided to use the robustness results reported by the AutoAttack (AA) benchmark\footnote{https://robustbench.github.io/\#leaderboard} and evaluated A3T and A3T$^+$ accordingly.
Table \ref{tab:cmp} provides corresponding results for several adversarial training methods. 

\begin{table}[htbp]
\caption{The clean and AutoAttack (AA) \cite{wang2021convergence} accuracy values of adversarially trained WideResNet models.}
\label{tab:cmp}
\begin{center}
\begin{tabular}{l|c|c|c|c|c}
\toprule
Methods                                       & Natural Acc      & AA Acc      &  Avg. Acc   & Architecture \\
\hline
Natural training                              &      \textbf{95.93}     &       0     &  47.97       &  WideResNet-34-10   \\
AT \citep{madry2017towards}                   &      87.14     &    44.04    &  65.59      &  WideResNet-34-10   \\
MMA \citep{ding2018mma}                       &      84.36     &    41.44    &  62.9       &  WideResNet-28-4 \\
TRADES \citep{zhang2019theoretically}         &      84.92     &    53.08    &  69.0       &  WideResNet-34-10 \\
Bilateral AT \citep{wang2019bilateral}        &      \bf{92.80}     &    29.35    &  61.08      &  WideResNet-28-10 \\
MART \citep{wang2019improving}\tablefootnote{The accuracy values presented on the online leaderboard of the AA benchmark correspond to the semi-supervised version of MART trained with additional unlabeled data. For a fair comparison, here, we present results for the model trained in the supervised learning setting that is obtained from the project's GitHub repository.}
                                              & 83.62          &   \underline{55.69}    &    69.65   & WideResNet-34-10 \\
CAT  \citep{cheng2020cat}\tablefootnote{Upon examination of the public implementation of the CAT method, we noticed the authors have mistakenly reported the training robust accuracy instead of the test robust accuracy values. These values are obtained by rerunning their experiments.}
                                              &      86.16     &\textbf{55.79}&   70.97     &   WideResNet-34-10 \\
Dynamic AT \citep{wang2021convergence}        &      85.03     &    48.70    &   66.87     &   WideResNet-34-10 \\
A3T                                           &      \underline{88.98}     &    52.07    &  70.52     &  WideResNet-34-10 \\
$A3T^+$                                       &      \underline{87.80}     & \underline{55.59} & \textbf{71.69} &   WideResNet-34-10          \\
\bottomrule
\end{tabular}
\end{center}
\end{table}

These results demonstrate that A3T and A3T$^+$ yield the highest natural accuracy after the Bilateral AT method which exhibited very limited robustness under AA attack.
In terms of robust accuracy, CAT yields the best accuracy, performing only marginally better than A3T$^+$ (+0.2\%)
and noticeably better than A3T (+3.7\%).
However, both A3T and A3T$^+$ yield a higher natural accuracy compared to CAT (+2.8-1.6\%).
These results overall present a more granular view of the generalization vs. robustness trade-off
with MM and LSR allowing a shift in favor of robustness, while misclassification awareness and LDS tilting the balance more towards improved generalization. 
When the average accuracy is evaluated, it is seen that A3T$^+$ offers a better
trade-off between natural and AA accuracy as it allows an increase in the former while keeping the robust accuracy
on par with other methods.
This finding further strengthens the idea that MM, LSR, and misclassification awareness are addressing different aspects of adversarial training and that they are indeed compatible with each other.

\subsubsection{Natural Language Processing Tasks}

Tests are performed on the GLUE tasks \cite{wang2018glue}\footnote{Out of the eight defined tasks, we excluded the one related to the STS-B dataset as it involves a regression task.}.
This benchmark contains seven datasets for two sentiment analysis tasks, two similarity tasks and three inference tasks.
Due to the discrete nature of  text, generating adversarial samples in the input space is nontrivial  because it involves projecting the continuous perturbation computed in the embedding space to the input space. 
Although several input-domain approaches are proposed to create adversarial samples, they are not efficient \cite{altinisik2022impact}.
An alternative approach is to perform adversarial training in the latent space  without the need for creating adversarial inputs \cite{villa}.
In our experiments, we also adopt this approach both during adversarial training and when performing adversarial attacks. 
It must be noted that in the case of launching an adversarial attack this approach is impractical to implement, but it nevertheless corresponds to a worst-case attack setting and allows a better evaluation of the model's robustness.

For tests, we fine-tuned the deBERTa-base model \cite{he2021deberta} from the HuggingFace library.
For each dataset in the GLUE benchmark, we created a fine-tuned, task-specific model by first training the model for three epochs at the suggested learning rate of $2e^{-5}$ \cite{he2021deberta}.
Then, the first five layers are frozen and fine-tuning is continued for another three epochs with either standard or adversarial training.
In all cases, adversarial samples are generated using three-step PGD assuming $\Delta = 0.01$ using random initialization with zero mean and standard deviation of 0.005.

\small{
\begin{table*}[t]
    \caption{Adversarial Training Results on GLUE Benchmark. Results show that for all tasks, A3T performs on par with or better than the conventional AT.}
    
    \centering
    \begin{tabular}{c|ccc|ccc}
        \toprule
        % 	\hline
        \multirow{2}{*}{Dataset}	           & \multicolumn{3}{c|}{Attack-Free Tests}             & \multicolumn{3}{c}{Under Attack Tests} \\
                   &  Standard      &  AT     & A3T     & Standard      &  AT & A3T\\ \hline
        MRPC	&88.7 &	69.8 &	\textbf{88.9}           &	12.0 &	\textbf{65.6} &	63.5 \\ \hline  	
        QNLI	&92.1 &	\textbf{92.6} &	92.3            &	40.1 &	50.5 &	\textbf{55.8} \\ \hline  	
        QQP	    &91.4 &	91.4 &	91.4                    &	45.8 &	63.7 &	\textbf{68.3} \\ \hline  	
        RTE	    &73.3 &	71.9 &	\textbf{77.2}           &	1.2 &	\textbf{31.2} &	29.7 \\ \hline  	
        SST-2	&93.0 &	93.9 &	\textbf{94.0}           &	23.1 &	\textbf{50.4} &	50.3 \\ \hline  	
        MNLI	&86.1 &	\textbf{86.8} &	86.5            &	4.7 &	33.2 &	\textbf{43.7} \\ \hline  	
        \textbf{Avgerage Acc} & 87.4 &	84.4 &	\textbf{88.4} &	21.2 &	49.1 &	\textbf{51.9} \\ \hline  
        CoLA	&\textbf{61.7} &	37.0 &	37.7        &	-91.4 &	\textbf{-23.0} &	-29.0 \\ \hline  
        \textbf{Avgerage All}&\textbf{83.7} &	77.6 &	81.2 &	5.1 &	38.8 &	\textbf{40.3} \\
        \bottomrule
    \end{tabular}
    \label{tab:POSdist}
    % 	}
\end{table*}
}

Table \ref{tab:POSdist} reports classification performance yielded by various models under both normal and adversarial attack scenarios to evaluate the improvement of A3T over AT.
As compared to the baseline accuracy of the model, i.e., attack-free test setting, A3T yields a small drop in the average performance (-2.5\%) but performs noticeably better (+3.6\%) than AT as displayed in the last line of the table. 
Similarly, under an adversarial attack test setting, A3T yields an improvement of 1.5\% over conventional AT.
The improvement due to A3T is more noticeable when the results of CoLA task is removed from the average as it uses a different metric than accuracy (third to the last line). 
In that case, the use of A3T is found to result in a performance improvement of 2.8\% over AT.

\subsubsection{Tabular Tasks}

The robustness of machine learning models that make financial decisions against adversarial attacks is another important area of concern.
Since financial data is mostly in tabular format, we tested the effectiveness of A3T on tabular data for two classification tasks.
The first test involves Matlab's {\em Retail Credit Panel} dataset \cite{matlab}, and the task is to predict the overall yearly default rate for subjects.
Each dataset sample contains risk factor, year, and the default status (0 or 1) for a subject. 
To predict what percent of subjects default in a given year, a logistic regression model with two hidden layers, consisting of 256 and 128 neurons, is trained over 500 epochs, with a learning rate of 0.001.
The adversarial training is only applied between epochs 100 and 500 using the attack parameters obtained after a grid search\footnote{Adversarial attack parameter ranges for the grid search were as follows: initialization noise variance for PGD \{0.05, 0.1\}; PGD step size $\alpha$ \{0.2, 0.1, 0.05\}, and projected perturbation limit $\Delta$ \{0.2,0.4\}. The total number of PGD steps was fixed to five.}.

\begin{figure}
\centering
    \centering
    \includegraphics[width=.75\columnwidth, trim = 1.5cm 3.5cm 1.5cm 3.5cm, clip]{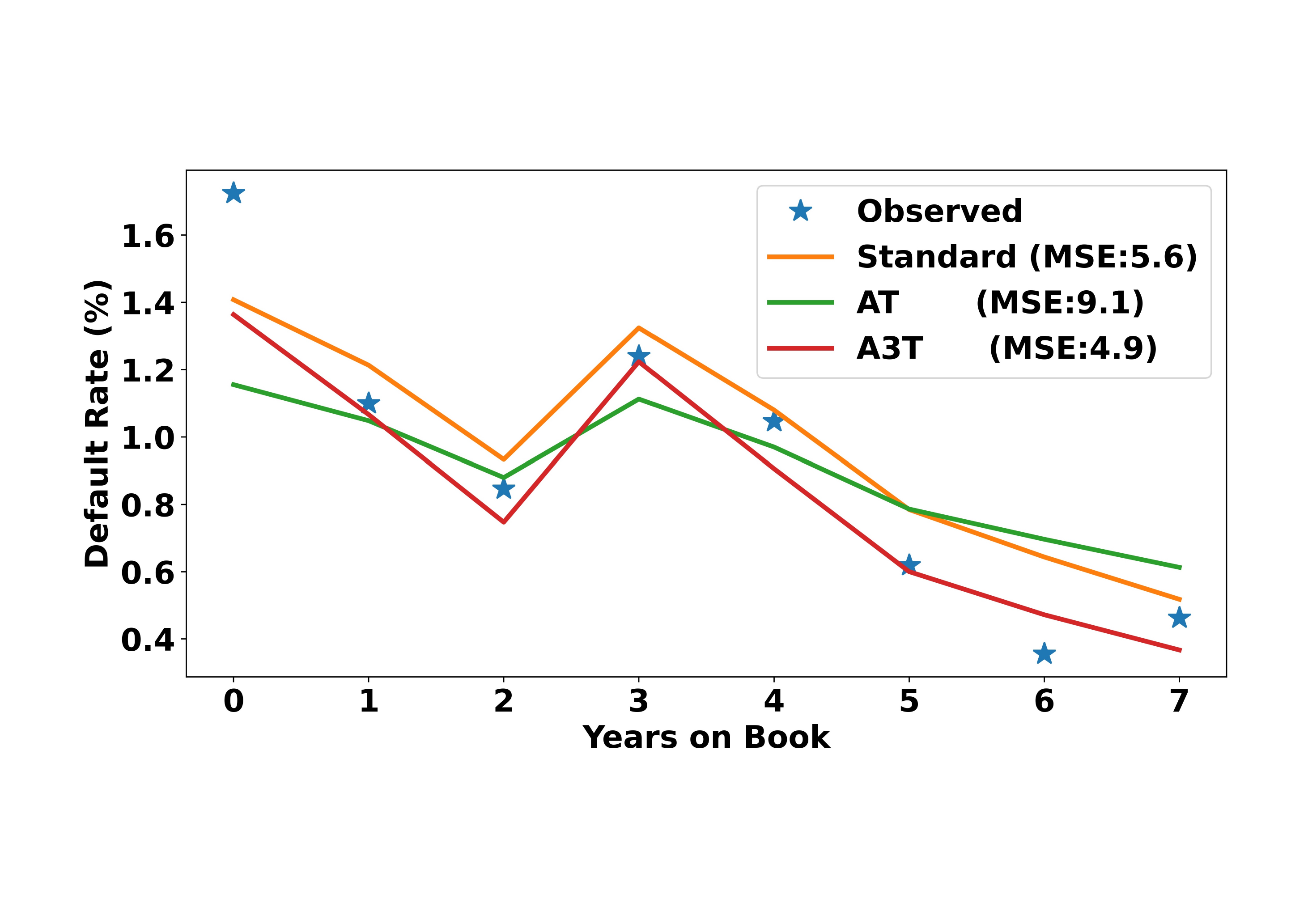}
   \caption{Yearly default rates predicted by robust and standard models obtained averaged over 50 runs for each model. MSE values are computed between model predictions and realized, ground-truth rates. Computed values are normalized by $e^{-5}$ for ease of viewing. }
  \label{fig:MatlabRes}
\end{figure}

Since the input data is very low-dimensional, no adversarial attack is performed. Instead, we investigated how adversarial training impacts the model's performance in the attack-free setting. 
Figure \ref{fig:MatlabRes} provides the average default rates predicted by different models after 50 runs in comparison to the ground truth rates. 
Results show that A3T predictions follow the realized default rates more closely than other approaches.
To better evaluate the fit of each model mean squared error (MSE) of predictions with respect to actual rates is also computed across all years.  
Accordingly, A3T is seen to yield the lowest MSE among all approaches, even outperforming the non-robust model.

The second task is binary classification and involves two datasets, namely, the European card dataset \textit{(ECD)} \cite{ECD} and the \textit{Adult} dataset \cite{adult}. 
The former dataset contains 492 fraud and 10K genuine transactions randomly downsampled from more than 280K transactions with 31 features, and the goal is to identify the type of transaction.
The latter involves 32K samples with nine features, and the objective is to predict whether the income of a subject is higher than \$50K or not. 
All categorical values are represented by one-hot encoding, and logistic regression models with the same network architecture described above is trained.
Rather than using a fixed adversarial training setting, multiple robust models with different training parameters are generated.  
Each model is then tested under 12 adversarial attack scenarios parameterized by the same possible parameter value configurations considered in the grid search,   
and the average of resulting accuracy values is taken as a model's under-attack prediction accuracy.
The standard and under-attack accuracy achievable by a model is finally determined by averaging corresponding values over five runs. 
Resulting accuracy values are plotted in Fig. \ref{fig:tbl}.

\begin{figure}
\centering
  \begin{minipage}[a]{0.48\columnwidth}
    \centering
    \includegraphics[width=1\columnwidth, trim = 3.4cm 10cm 2.5cm 10cm, clip]{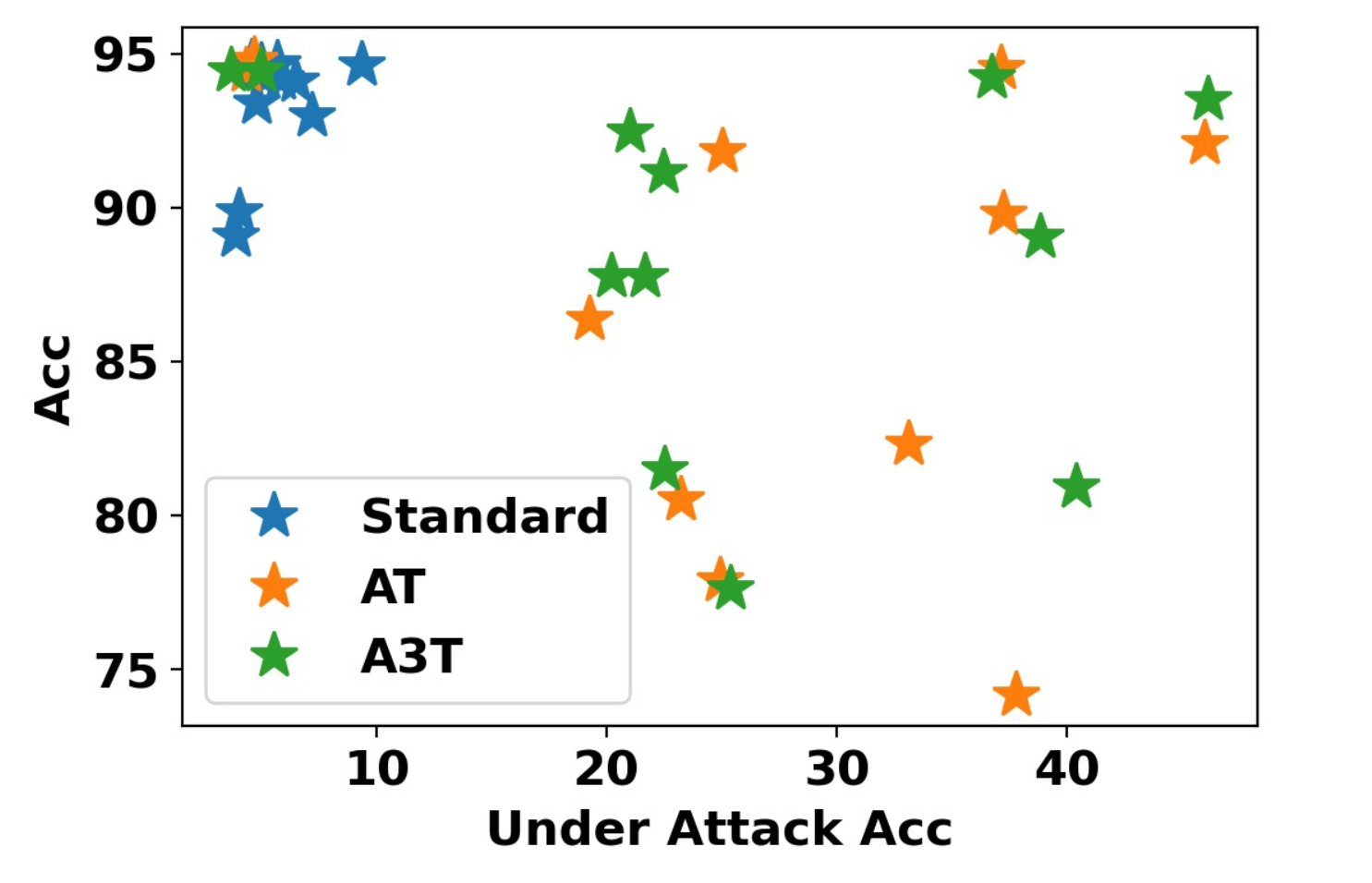}
    \centerline{(a) ECD}
  \end{minipage}
  \begin{minipage}[c]{0.48\columnwidth}
    \centering
    \includegraphics[width=1\columnwidth, trim = 3cm 10cm 2.5cm 10cm, clip]{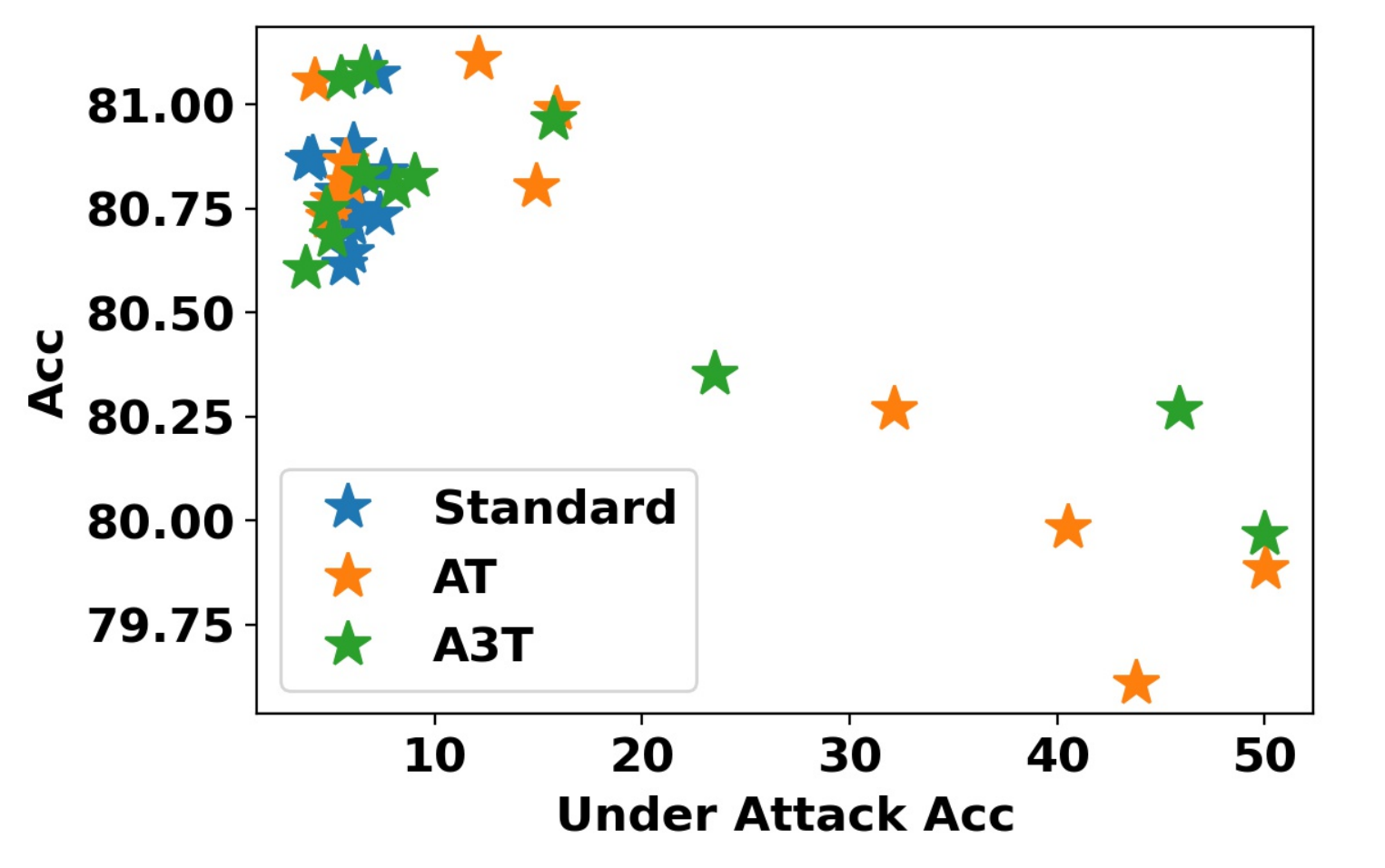}
    \centerline{(b) Adult}
  \end{minipage}
  \caption{Model accuracy values for both normal and under-attack settings obtained by averaging over five runs.  
   For both AT and A3T, 12 models are created for varying parameter values that govern the adversarial sample generation process. A3T is found to yield the most favorable performance, exhibiting high accuracy in both attack-free and under attack scenarios.}
  \label{fig:tbl}
\end{figure}

In the figure, the upper right corner corresponds to a performance regime where a model performs well both in attack-free and under-attack settings.
Thus, the best performing model can be identified based on how close it gets to that corner.
It can be seen that several robust versions of A3T exhibit high under-attack performance with only a slight drop in performance in the attack-free setting. 
This result also demonstrates the importance of the choice of adversarial training parameters on model accuracy.

\section{Discussion and Conclusions}
Adversarial training uses adversarial samples to retrain machine learning models in order to promote robustness. 
The robustness of a model can, however, only be attained at the expense of model generalization.
In this work, we propose a new adversarial training method that yields a more favorable robustness-generalization trade-off. We discuss below further points to provide a better insight about our proposed approach. 

\bmhead{Generation of non-adversarial samples}
The underlying idea of our misclassification-aware adversarial training approach (A3T) is to prevent the creation of hard examples from misclassified samples, thereby reducing the risk of overfitting. 
A3T effectively realizes this by generating samples that are non-adversarial in nature.
That is, instead of applying a perturbation that maximizes the loss, A3T computes a perturbation that minimizes the loss, defeating the purpose of an adversarial sample. 
Hence, the newly generated samples by A3T are maximally close to the decision boundary as opposed to adversarial samples that are maximally away from the boundary.
From this perspective, imposing a bound on the extent of loss reduction may not seem meaningful. 
However, since samples with arbitrarily low-loss values will constitute easy examples, they will likely be less informative for the model. 
Therefore, we also evaluated how much to reduce the loss of a misclassified sample. 
Our test results indicated that A3T's generalization ability does not improve when generated samples have a loss similar to that of the misclassified one.
Tests also indicated that a loss-reduction around $\Delta$ makes A3T most effective.

\bmhead{Incorporation of LDS with A3T}
Since MM, LSR, LDS, and MA are proposed to address different aspects of adversarial training, a question to be answered is if and to what extent these design choices interfere with each other.
In fact, our results with A3T$^+$ show that combining MM, LSR, and MA help achieve a more preferable generalization vs. robustness trade-off. 
However, incorporating LDS with A3T, i.e., using A3T during inner maximization and LDS as an additional regularization, will be ineffective. 
Crucially, LDS applies label smoothing by placing samples of predicted class, which for a misclassified sample includes opposing class samples, around the misclassified sample. 
In contrast, A3T generates samples of the same class as the misclassified sample towards the decision boundary. 
Hence, incorporating the two together will potentially contribute to the model's overfitting as samples of opposing classes will be placed in the same vicinity.

\bmhead{Influence on robust overfitting}
Robust overfitting is another artifact of adversarial training wherein a model test accuracy remains similar but the robust accuracy exhibits a drop as the number of training epochs increase \cite{dong2021exploring,rice2020overfitting}. 
Our observations show that A3T also suffers from robust overfitting. 
This is in agreement with the findings of MART, where a larger variation between best and last accuracies is observed compared to A3T.
Hence, we can deduce that the root cause for this phenomenon does not mainly relate to the treatment of misclassified examples.

\section{Acknowledgement}
This work is partially supported by the Qatar National Research Fund (QNRF) grant NPRP11C-1229-170007.

% \begin{appendices}

% \section{Section title of first appendix}\label{secA1}

% An appendix contains supplementary information that is not an essential part of the text itself but which may be helpful in providing a more comprehensive understanding of the research problem or it is information that is too cumbersome to be included in the body of the paper.

% \end{appendices}

%%===========================================================================================%%
%% If you are submitting to one of the Nature Portfolio journals, using the eJP submission   %%
%% system, please include the references within the manuscript file itself. You may do this  %%
%% by copying the reference list from your .bbl file, paste it into the main manuscript .tex %%
%% file, and delete the associated \verb+\bibliography+ commands.                            %%
%%===========================================================================================%%
%\bibliographystyle{plainnat}
\bibliography{bibfile}% common bib file
%% if required, the content of .bbl file can be included here once bbl is generated
%%\input sn-article.bbl

%% Default %%
%%\input sn-sample-bib.tex%
\newpage
\section{Appendix}
\begin{theorem*} Let $f_{\bm{\theta}}(\mathbf{x})=\bm{\theta}\mathbf{x} + b$ be a linear model trained with a logistic loss $\ell$. Assume that $(\mathbf{x}_i,y_i)$ is a misclassified training example and that $\bm{\delta}_1 = \argmax_{\delta \in \Delta}\ell(f_{\bm{\theta}}(\mathbf{x_i}), y_i)$ and $\delta_2 = \argmax_{\delta \in \Delta}\ell(f_{\bm{\theta}}(\mathbf{x_i}), (1 - 2y_i))$ are the solutions to the inner maximization using standard AT (\equref{eq:adversarial_loss1}) and A3T (\equref{eq:our_loss}), respectively. We prove that
\[
\abs{\bm{\theta} (\mathbf{x}_i + \bm{\delta}_{2}) + b} \leq 
\abs{\bm{\theta} (\mathbf{x}_i + \bm{\delta}_{1}) + b}.
\]
\end{theorem*}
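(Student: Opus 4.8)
The plan is to exploit the fact that for a linear model the inner maximization over the perturbation set admits a closed form, so both $\bm{\delta}_1$ and $\bm{\delta}_2$ can be written explicitly and the two perturbed scores compared directly. First I would record the elementary observation that, since the logistic loss $\ell(f_{\bm{\theta}}(x),c)=\log\!\left(1+\exp(-c\,f_{\bm{\theta}}(x))\right)$ is strictly decreasing in the margin $c\,f_{\bm{\theta}}(x)$, maximizing $\ell(f_{\bm{\theta}}(\mathbf{x}_i+\delta),c)$ over $\delta\in\Delta$ is equivalent to minimizing the linear quantity $c\,\bm{\theta}\delta$. On the $L_\infty$ ball $\Delta=\{\delta:\|\delta\|_\infty\le\epsilon\}$ this minimum is attained at $\delta=-c\,\epsilon\,\sgn(\bm{\theta})$, producing the score change $\bm{\theta}\delta=-c\,\epsilon\|\bm{\theta}\|_1$; this is exactly the perturbation $-y\epsilon\sgn(\bm{\theta})$ used in Example~1.

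Applying this with the two labels is the heart of the argument. Standard AT uses the true label, so taking $c=y_i$ gives $\bm{\delta}_1=-y_i\,\epsilon\,\sgn(\bm{\theta})$, while A3T uses the flipped (predicted) label $1-2y_i=-y_i$, giving $\bm{\delta}_2=+y_i\,\epsilon\,\sgn(\bm{\theta})=-\bm{\delta}_1$. Writing $f_0:=\bm{\theta}\mathbf{x}_i+b$ for the clean score and $m:=\epsilon\|\bm{\theta}\|_1\ge 0$, this yields
\[
\bm{\theta}(\mathbf{x}_i+\bm{\delta}_1)+b=f_0-y_i\,m,\qquad
\bm{\theta}(\mathbf{x}_i+\bm{\delta}_2)+b=f_0+y_i\,m.
\]
It then remains to compare $\abs{f_0+y_i m}$ with $\abs{f_0-y_i m}$, which I would do by squaring: the claimed inequality is equivalent to $(f_0+y_i m)^2\le(f_0-y_i m)^2$, i.e. to $y_i f_0\, m\le 0$. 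Here the hypothesis that $(\mathbf{x}_i,y_i)$ is \emph{misclassified} enters decisively: misclassification means the predicted label $\sgn(f_0)$ disagrees with $y_i$, hence $y_i f_0\le 0$, and since $m\ge 0$ the product $y_i f_0\, m$ is indeed nonpositive, closing the argument.

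I expect the only genuine obstacle to be bookkeeping rather than mathematics: one must reconcile the label conventions, verifying that $1-2y_i$ is precisely the opposite of the true label so that $\bm{\delta}_2=-\bm{\delta}_1$, and then confirm that the misclassification hypothesis supplies exactly the sign condition $y_i f_0\le 0$ required after squaring. A secondary point worth stating explicitly is that the closed form and the identity $\bm{\delta}_2=-\bm{\delta}_1$ use only that $\Delta$ is symmetric about the origin, so the same proof extends verbatim to any centred norm ball, not just $L_\infty$.
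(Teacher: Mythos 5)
Your proposal is correct and follows essentially the same route as the paper's appendix proof: both derive the closed-form perturbations $\bm{\delta}_1=-y_i\epsilon\sgn(\bm{\theta})$ and $\bm{\delta}_2=-\bm{\delta}_1$ on the $L_\infty$ ball, reduce the claim to comparing $\abs{f_0+y_i m}$ with $\abs{f_0-y_i m}$ where $f_0=\bm{\theta}\mathbf{x}_i+b$ and $m=\epsilon\|\bm{\theta}\|_1$, and use misclassification to pin down the sign of $y_i f_0$. Your closing step by squaring, which reduces the inequality exactly to $y_i f_0\, m\le 0$, is actually a cleaner finish than the paper's, whose passage from $(1-2y_i)(\bm{\theta}\mathbf{x}_i+b)\le y_i(\bm{\theta}\mathbf{x}_i+b)$ to the corresponding inequality between absolute values is not valid for arbitrary reals and implicitly relies on the same sign condition $y_i f_0\le 0$ that you make explicit.
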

\begin{proof}
First note that if $(x_i,y_i)$ is a training example with
$y_i \in \{+1,-1\}$ then the misclassified label is
 $1 - 2y_i$. Also, note that the distance between a data point $\mathbf{x_0}$ and
the decision boundary $\bm{\theta }\mathbf{x} + b$ is given by
$\frac{\abs{\bm{\theta }\mathbf{x_0} + b}}{\|\bm{\theta}\|}$

\noindent
Based on the assumption that $x_i$ is misclassifed then
\[
L(y_i(\bm{\theta }\mathbf{x_i} + b)) \geq 
L((1 - 2y_i)(\bm{\theta }\mathbf{x_i} + b))
\]
Since $L$ is logistic loss it is a monotonically decreasing
function, therefore
\[
(1 - 2y_i)(\bm{\theta }\mathbf{x_i} + b)
\leq 
y_i(\bm{\theta }\mathbf{x_i} + b)
\]
Now, \\
\begin{align*}
& \abs{\bm{\theta} (\mathbf{x}_i + \bm{\delta}_{2}) + b} & \\
= & \abs{\bm{\theta }(x_i - \epsilon(1-2y_i)\sgn(\bm{\theta})) + b} & \\
= & \abs{\bm{\theta }x_i + b - \epsilon(1-2y_i)\|\bm{\theta}\|_1} & \because \bm{\theta} \sgn(\bm{\theta}) = \|\bm{\theta}\|_{1} \\
= & \abs{1-2y_i}\abs{\bm{\theta }x_i + b - \epsilon(1-2y_i)\|\bm{\theta}\|_1} & \because \abs{1-2y_i} = 1\\
= & \abs{(1-2y_i)(\bm{\theta }x_i + b) - \epsilon\|\bm{\theta}\|_1 } & \because \abs{ab} = \abs{a}\abs{b} \wedge (1 - 2y_{i})^2 = 1\\
\leq &
\abs{(y_i(\bm{\theta }x_i + b) - \epsilon y_i^2\|\bm{\theta}\|_1 } & \because \mbox{assumption} \wedge y_{i}^2 = 1  \\
= &
\abs{\bm{\theta }(x_i +  - \epsilon y_i\sgn{(\bm{\theta})) + b }}
& \because  \abs{ab} = \abs{a}\abs{b} \wedge \abs{y_i} = 1 \\
= & \abs{\bm{\theta} (\mathbf{x}_i + \bm{\delta}_{1}) + b} & \\
\end{align*}
 \end{proof}
 
 \begin{algorithm}[!tb]
\caption{Algorithm of A3T$^+$ methods}\label{a1:a3t+}
\begin{algorithmic}[1]
\Require $E$: the number of epochs, $D=\{(x_i, y_i)\}_{i=1}^n$:  the dataset, $f_{\bm{\theta}}(x)$: the machine learning model parametrized by $\bm{\theta}$, $\delta$: the perturbation initialized by $\sigma$ and limited by $\epsilon$, $\tau$: the global learning rate, $\alpha$: the adversarial learning rate, $S$: the number of PGD step, $\Pi$ the projection function, $c$ is the ALS weighting factor, and $\eta$ is the MM step size.
\Ensure Model parameters $\theta$

\Procedure{$A3T^+$}{}
    \State Initial every sample's $\epsilon_i$ with 0
    \For{$e=1,..,E$}
      \For{$(x_i,y_i) \in \mathcal{D}$}
            \State $\hat{y_i} \leftarrow \argmax \left(f(x_i, \theta)\right)$
            \State $\tilde{y_i} \leftarrow (1-c\epsilon_i) \hat{y_i}+(1-c\epsilon_i)\text{Dirichlet}(\beta)$ \Comment{ALS}
            \State $\epsilon_i \leftarrow \epsilon_i + \eta$ \Comment{MM}
            \State $\delta_i \sim \mathcal{N}(0,\sigma^{2}I)$
            \For{$s=1,..,S$}
                \State $\delta_i \leftarrow \Pi_{\epsilon_i}\left(\delta_i + \alpha \cdot \sgn\left(\nabla_{x_i}\ell(f_\theta(x_i+\delta_i),\tilde{y_i})\right)\right)$
                % \State $\delta_i \leftarrow \delta_i + \mu \cdot sign(\nabla_{x_i}\ell(f_\theta(x_i+\delta_i),\tilde{y_i})$
                % \State $\delta_i = \Pi_{\|\delta_i\|_\infty \leq \epsilon_i} (\delta_i)$
                % \State $\delta_i = \Pi_{epsilon_i} (\delta_i)$ 
                %{\norm{\delta_i}_{L_{\infty}}\leq \epsilon_i
                % \State $\delta_i \leftarrow \max(\min(\delta_i,\epsilon_i),-\epsilon_i)$
            \EndFor
            \If{$\argmax\left(f_\theta(x_i+\delta_i))\right) \neq y_i$} \Comment{MM}
                \State $\epsilon_i \leftarrow \epsilon_i - \eta$
            \EndIf
            \State $\epsilon_i \leftarrow \min(\epsilon_{max},\epsilon_i)$ \Comment{MM}
            \State $\tilde{y_i} \leftarrow (1-c\epsilon_i) y_i+(1-c\epsilon_i)\text{Dirichlet}(\beta)$ \Comment{ALS}
            \State $\theta \leftarrow \theta - \tau \nabla_\theta \ell(f_\theta(x_i+\delta_i),\tilde{y_i})$ %\textcolor{blue}{check $L$, can we use adaptive optimization here?}
      \EndFor
    \EndFor
\EndProcedure
\end{algorithmic}
\end{algorithm}

\end{document}